\documentclass[11pt, dvipsnames, DIV=12]{scrartcl}
\usepackage{silence}
\usepackage[utf8]{inputenc}

\usepackage{ai-involvement}
\AIdeclare[verified]{4}

\usepackage[T1]{fontenc}
\usepackage{amsmath,amssymb,amsthm}
\usepackage{mathtools}
\usepackage{tikz}
\usetikzlibrary{positioning}
\usepackage{microtype}
\usepackage[pdfa,
hidelinks,
pdftex,
pdfdisplaydoctitle,
pdfpagelabels,
pdfauthor={},
pdftitle={The Boolean Power of ReLU},
pdfsubject={},
pdfkeywords={GNNs, MPLang, activation functions},
pdfproducer={Latex with the hyperref package},
pdfcreator={pdflatex}
]{hyperref}
\usepackage{natbib}
\newtheorem{theorem}{Theorem}[section]
\newtheorem{lemma}[theorem]{Lemma}
\newtheorem{fact}[theorem]{Fact}
\newtheorem{proposition}[theorem]{Proposition}
\newtheorem{corollary}[theorem]{Corollary}
\theoremstyle{definition}
\newtheorem{definition}[theorem]{Definition}
\newtheorem{remark}[theorem]{Remark}

\usepackage{fvextra}
\makeatletter
\newcommand{\DiamondPlus}{\mathbin{\mathpalette\DiamondPlus@aux\relax}}
\newcommand{\DiamondPlus@aux}[2]{%
  \ooalign{%
    $\m@th#1\Diamond$\cr
    \hidewidth$\m@th#1\raisebox{0.0ex}{\scalebox{1}[1.1]{$+$}}$\hidewidth\cr
  }%
}
\makeatother
\newcommand{\R}{\mathbb{R}}
\newcommand{\Q}{\mathbb{Q}}
\newcommand{\N}{\mathbb{N}}

\newcommand{\B}{\mathbb{B}}
\newcommand{\relu}{\mathrm{ReLU}}
\newcommand{\trelu}{\mathrm{TrReLU}}
\newcommand{\bool}{\mathrm{bool}}
\newcommand{\dm}{\DiamondPlus}
\newcommand{\Estar}{E_\star}

\newcommand{\sgn}{\mathrm{sgn}}
\newcommand{\MPLang}{\text{-}\mathrm{MPLang}}

\title{The Boolean Power of ReLU}
\author{
Pablo Barcel\'o\\
\texttt{\small pbarcelo@uc.cl}\\
\small Pontifical Catholic University\\
\small IMFD \& CENIA,
 Chile
\and
Floris Geerts\\
\texttt{\small floris.geerts@uantwerp.be}\\
\small University of Antwerp\\
\small Belgium
\and
Matthias Lanzinger\\
\texttt{\small matthias.lanzinger@tuwien.ac.at}\\
\small TU Wien\\
\small Austria
\and
Klara Pakhomenko\\
\texttt{\small klara.pakhomenko@uhasselt.be}\\
\small Universiteit Hasselt\\
\small Belgium
\and
Jan Van den Bussche\\
\texttt{\small jan.vandenbussche@uhasselt.be}\\
\small Universiteit Hasselt\\
\small Belgium
}
 \date{}
\begin{document}
\emergencystretch=2em
\maketitle
\begin{abstract}
We prove that, on finite simple undirected graphs equipped with a single Boolean node feature, the Boolean queries expressible in $\Sigma\MPLang$, for \emph{any} collection $\Sigma$ of eventually constant activation functions and with arbitrary real coefficients, form a strict subclass of the Boolean queries expressible in $\mathrm{ReLU}\MPLang$. We thereby settle a recently posed open problem: whether $\mathrm{ReLU}\MPLang$ is more powerful than $\mathrm{TrReLU}\MPLang$ when it comes to Boolean queries. In
particular, this
implies that $\mathrm{ReLU}$-GNNs are strictly more expressive than
$\{\mathrm{TrReLU},\mathrm{id}\}$-GNNs with respect to Boolean queries on Boolean-featured graphs.
\end{abstract}
\section{Introduction}
Graph neural networks (GNNs) compute node embeddings by repeatedly combining a
node's current features with aggregated features from its neighbours, followed
by a pointwise activation function.  $\Sigma\MPLang$ (for
\emph{$\Sigma$-Message-Passing Language}) provides a small declarative
language for precisely this computation: its expressions are built from node
features, affine combinations, neighbourhood aggregation, and activations from a set $\Sigma$.
This makes $\Sigma\MPLang$ a convenient formalism for isolating how architectural
choices affect GNN expressivity.  In particular, $\Sigma\MPLang$ has the same
expressive power as GNNs whose layers may use either $\sigma\in\Sigma$ or the identity
activation (i.e., a skip layer that passes a node's incoming value through
unchanged), while $\mathrm{ReLU}\MPLang$ captures ReLU-GNNs
\citep{Geerts_2022}.
Recent work showed that the unbounded ReLU is strictly more expressive than the
bounded truncated ReLU for \emph{numerical} queries, but left open whether this
advantage survives Booleanisation, where the numerical output at a node is
thresholded to obtain a Boolean query \citep{Barcelo+2026}.  This does not
follow from numerical separation: two languages may compute different
real-valued embeddings while still defining exactly the same Boolean queries.
Nor is the question settled by the Boolean separation result by Benedikt, Lu and Tan
\cite{benedikt2025decidabilitygraphneuralnetworks}, since that result concerns GNNs without the identity
layers implicit in $\Sigma\MPLang$.  Because such linear layers can themselves increase
expressive power, it remained possible that they eliminate the Boolean gap
between truncated ReLU and ReLU.
We show that they do not.  Already on finite simple undirected graphs equipped
with a single Boolean node feature,
\[
  \bool\bigl(\mathrm{TrReLU}\MPLang\bigr)
  \subsetneq
  \bool\bigl(\mathrm{ReLU}\MPLang\bigr),
\]
where $\bool(\cdot)$ denotes Booleanisation.
More generally, we exhibit a Boolean ReLU query that is not definable in
$\Sigma\MPLang$ for any collection $\Sigma$ of eventually constant activation
functions, even with arbitrary real coefficients.  Thus, the unboundedness of
ReLU affects not only the numerical values of graph embeddings, but also the
node properties that message-passing GNNs can recognise after thresholding.

\begin{aidisclosure}[verified]{4}
Generative AI tools were used during the development and preparation of this manuscript. In particular, ChatGPT 5.6 Sol (Pro) was used to explore proof strategies and generated the proof of the final result from a single prompt. Claude Fable 5 and the authors subsequently checked, revised where necessary, and independently verified the argument before including it in the manuscript.

ChatGPT and Claude were also used for language editing, improving exposition, and assisting with \LaTeX{} preparation. All mathematical claims, proofs, citations, and conclusions appearing in the final manuscript have been reviewed and validated by the authors, who take full responsibility for the content and correctness of the paper.
\end{aidisclosure}

\section{Preliminaries}\label{sec:prelim}
A \emph{graph} $G=(V,E)$ is finite, simple and undirected: $V$ is a finite set and
$E\subseteq\bigl\{\{u,v\}: u,v\in V,\ u\neq v\bigr\}$. We write $N_G(v) \coloneqq \bigl\{u:\{u,v\}\in E\bigr\}$ and
$\deg_G(v) \coloneqq |N_G(v)|$, omitting the subscript when $G$ is clear from the context. For $d>0$, a
\emph{$d$-embedding} of $G$ is a map $\gamma:V\to\R^d$; the pair $(G,\gamma)$ is a
\emph{$d$-embedded graph}. A \emph{Boolean-featured graph} is a $d$-embedded graph with
$\gamma(V)\subseteq\{0,1\}^d$. Throughout, the separating graphs use only one Boolean node feature, denoted by
$P$.  Thus the separation already holds in the minimal setting $d=1$.

Fix a set $\Sigma$ of functions $\sigma:\R\to\R$, called \emph{activation functions}. The language
$\Sigma\MPLang$ over $d$-embeddings is generated by the grammar
\[
e\ \Coloneqq\ 1\ \mid\ P_i\ \mid\ ae\ \mid\ e+e\ \mid\ \dm e\ \mid\ \sigma(e),
\qquad i\in[d],\ \sigma\in\Sigma,\ a\in\R.
\]
The semantics on a $d$-embedded graph $(G,\gamma)$ assigns to each expression $e$ and each node
$v$ a real number $e(G,\gamma)(v)$, as follows:
\[
\begin{aligned}
1(G,\gamma)(v)& \coloneqq 1, &
P_i(G,\gamma)(v)& \coloneqq \gamma(v)_i,\\
(ae)(G,\gamma)(v)& \coloneqq a\cdot e(G,\gamma)(v), &
(e_1+e_2)(G,\gamma)(v)& \coloneqq e_1(G,\gamma)(v)+e_2(G,\gamma)(v),\\
(\dm e)(G,\gamma)(v)& \coloneqq \textstyle\sum_{u\in N(v)}e(G,\gamma)(u), &
\sigma(e)(G,\gamma)(v)& \coloneqq \sigma\bigl(e(G,\gamma)(v)\bigr).
\end{aligned}
\]
We freely use the abbreviations $e_1-e_2 \coloneqq e_1+(-1)e_2$ and $m \coloneqq m\cdot 1$ for integer
constants, and we note that the identity activation is implicit in the grammar: linear
terms need not pass through any $\sigma$. 
Throughout we use
\[
\relu(x) \coloneqq \max\{0,x\},\qquad
\trelu(x) \coloneqq \min\{1,\max\{0,x\}\},\qquad
\bool(x) \coloneqq \begin{cases}1,&x>0,\\ 0,&x\le 0,\end{cases}
\]
for the ReLU, truncated ReLU, and unit-step (Booleanisation) activations, respectively.
We also use $\sgn(x)=+1$ if $x>0$, $\sgn(x)=0$ if $x=0$, and
$\sgn(x)=-1$ if $x<0$. 
For a single activation function $\sigma$, we write $\sigma\MPLang$ for
$\{\sigma\}\MPLang$; in particular $\mathrm{ReLU}\MPLang:=\{\relu\}\MPLang$
and $\mathrm{TrReLU}\MPLang:=\{\trelu\}\MPLang$.

\begin{definition}[Booleanisation of a query]\label{def:boolq}
For an expression $e$, its \emph{Booleanisation} $e_\B$ is the Boolean query
\[
e_\B(G,\gamma)(v) \coloneqq \begin{cases}1,&\text{if } e(G,\gamma)(v)>0,\\ 0,&\text{otherwise.}\end{cases}
\]
Expressions $e,e'$ are \emph{numerically equivalent} on a class $\mathcal C$ of embedded
graphs if $e(G,\gamma)=e'(G,\gamma)$ for all $(G,\gamma)\in\mathcal C$, and \emph{Boolean
equivalent} on $\mathcal C$ if $e_\B(G,\gamma)=e'_\B(G,\gamma)$ for all
$(G,\gamma)\in\mathcal C$. For a language $L$ we write $\bool(L) \coloneqq \{e_\B: e\in L\}$ for its
class of Boolean queries.
\end{definition}
\begin{definition}[Eventually constant functions]\label{def:evconst}
A function $\sigma:\R\to\R$ is \emph{eventually constant} if there exist reals
$x^\sigma_-\le x^\sigma_+$ and constants $C^\sigma_-,C^\sigma_+\in\R$ such that
$\sigma(x)=C^\sigma_-$ for all $x\le x^\sigma_-$ and $\sigma(x)=C^\sigma_+$ for all
$x\ge x^\sigma_+$.
\end{definition}
For instance, $\trelu$, $\bool$, $\sgn$, the hard sigmoid and the hard tanh are eventually
constant, whereas $\relu$ and $\mathrm{id}$ are not. 
\section{The main theorem}
We first recall that $\mathrm{TrReLU}\MPLang$ is subsumed by $\mathrm{ReLU}\MPLang$ \citep{Barcelo+2026}.
\begin{proposition}\label{prop:containment}
Every $\mathrm{TrReLU}\MPLang$ expression is numerically equivalent, on all embedded graphs,
to a $\mathrm{ReLU}\MPLang$ expression. Consequently,
\[
\bool(\mathrm{TrReLU}\MPLang)\subseteq\bool(\mathrm{ReLU}\MPLang).
\]
\end{proposition}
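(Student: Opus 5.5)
The plan is a structural induction on $\mathrm{TrReLU}\MPLang$ expressions, producing for each expression $e$ a $\mathrm{ReLU}\MPLang$ expression $\widehat e$ with $\widehat e(G,\gamma)(v)=e(G,\gamma)(v)$ for every embedded graph $(G,\gamma)$ and every node $v$. The key ingredient is the pointwise identity
\[
\trelu(x)=\relu(x)-\relu(x-1)\qquad\text{for all }x\in\R,
\]
which we check by cases. For $x\le 0$ both sides are $0$. For $0<x<1$ both sides equal $x$. For $x\ge 1$ the right-hand side is $x-(x-1)=1$.

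The translation is then defined clause by clause:
\[
\widehat 1\coloneqq 1,\qquad \widehat{P_i}\coloneqq P_i,\qquad \widehat{ae}\coloneqq a\widehat e,\qquad \widehat{e_1+e_2}\coloneqq\widehat{e_1}+\widehat{e_2},\qquad \widehat{\dm e}\coloneqq\dm\widehat e,
\]
\[
\widehat{\trelu(e)}\coloneqq\relu(\widehat e)+(-1)\,\relu\bigl(\widehat e+(-1)\cdot 1\bigr).
\]
The last clause only uses the grammar constructs $\relu$, scalar multiplication, addition and the constant $1$, so it lies in $\mathrm{ReLU}\MPLang$.

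Numerical equivalence follows by induction, since the semantics is compositional. For the base cases and the linear constructors it is immediate from the definitions. For $\dm$, the neighbour sums agree because the summands agree at every node, by the induction hypothesis applied to all nodes. For $\trelu$, the induction hypothesis at node $v$ together with the identity above gives the claim.

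The Boolean consequence is immediate. Since $e_\B$ depends only on the values $e(G,\gamma)(v)$, numerical equivalence implies $e_\B=\widehat e_\B$, so every query in $\bool(\mathrm{TrReLU}\MPLang)$ lies in $\bool(\mathrm{ReLU}\MPLang)$.

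I expect no real obstacle. The only point needing care is that the induction must be stated uniformly over all nodes simultaneously, so that the aggregation case goes through.
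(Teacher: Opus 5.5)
Your proposal is correct and follows essentially the same route as the paper: a structural translation that replaces each $\trelu(e)$ by $\relu(\widehat e)-\relu(\widehat e-1)$. It then uses an induction showing numerical equivalence on all embedded graphs, which in turn gives Boolean equivalence. Your case check of the identity and your remark that the induction hypothesis must hold at all nodes simultaneously (for the aggregation case) make explicit what the paper calls a straightforward structural induction.
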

\begin{proof}
For all $x\in\R$ we have $\trelu(x)=\relu(x)-\relu(x-1)$. Define a translation $T$ that
recursively replaces every subexpression of the form $\trelu(e)$ by
$\relu(T(e))-\relu(T(e)-1)$ and commutes with all other constructors. A straightforward
structural induction shows that $T(e)$ is numerically equivalent to $e$ on every embedded
graph, and numerical equivalence implies Boolean equivalence.
\end{proof}

Our main result is, as follows.
\begin{theorem}[Main theorem]\label{thm:main}
Let $\Sigma$ be any collection of eventually constant activation functions. There exists a
$\mathrm{ReLU}\MPLang$ expression $\Estar$ such that no $\Sigma\MPLang$ expression (with
arbitrary real coefficients) is Boolean equivalent to it on the class of finite graphs with a
single Boolean feature.
\end{theorem}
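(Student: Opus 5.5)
We will separate the two languages with a family of graphs on which every $\Sigma\MPLang$ expression eventually behaves like a fixed \emph{affine} function of a growing parameter, while a ReLU expression can compare two unbounded quantities. Take $\Estar \coloneqq \relu(\dm P - \dm(1-P))$, which at a node $v$ computes the positive part of $\#\{P\text{-neighbours}\}-\#\{\neg P\text{-neighbours}\}$. Its Booleanisation says ``$v$ has strictly more $P$-neighbours than non-$P$-neighbours''. To make unboundedness matter, we would rather use a two-hop version, for example $\Estar\coloneqq \dm\relu(\dm P - 1)-\dm\relu(\dm(1-P)-1)$ or something similar. The point is that the comparison should involve sums of unbounded values, so that an eventually constant activation cannot track it. We will fix the exact expression after seeing what the invariance argument below needs.

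\textbf{Graph family.} Use parametrised graphs $G_{n,m}$: a root $r$ joined to two gadget groups. One group consists of a single neighbour $a$ carrying $n$ pendant $P$-leaves. The other consists of $k$ neighbours $b_1,\dots,b_k$, each carrying a bounded number of $P$-leaves. The total $P$-mass around $r$ is then comparable, but it is distributed differently: concentrated versus spread. A ReLU expression like $\dm\relu(\dm P-c)$ distinguishes concentrated mass, because the unbounded value passes through the activation. An eventually constant $\sigma$ applied at $a$ saturates once $n$ is large. The target query is that the concentrated mass exceeds the spread mass, i.e.\ $n > c\cdot k$. This is a genuinely linear threshold between two growing parameters.

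\textbf{Key lemma (the main obstacle).} For every $\Sigma\MPLang$ expression $e$ there is $N$ with the following property. For $n,k\ge N$ and each node type $t$ (root, $a$, $b_i$, leaves), $e(G_{n,k})(t)$ is a polynomial in $(n,k)$ of a restricted form, and on the root it is eventually of the form $\alpha + \beta k$, with no $n$-dependence except through saturated constants. We prove this by structural induction. Affine combinations and $\dm$ preserve ``polynomial in $n,k$''. The degree rises under $\dm$ because $a$ has $n$ neighbours, but at the root, $a$ contributes only once. For $\sigma(e)$, the induction value is a polynomial in $(n,k)$, and on the region $n,k\ge N$ a nonconstant polynomial tends to $\pm\infty$ along the relevant directions. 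Then $\sigma$ is eventually $C_\pm$, while a constant polynomial stays constant. The delicate part is the values at $a$, which can be like $n\cdot(\text{something})$ and may have zero leading coefficient, or have sign depending on the ratio $n/k$. To control this, we restrict to a one-parameter curve, for instance $k$ fixed large and $n\to\infty$, or $n=\lambda k$, so that every quantity is a univariate polynomial in $n$. It is then eventually of constant sign, and each $\sigma(\cdot)$ is eventually constant or eventually equal to $\sigma$ at a constant argument. By induction, every subexpression is eventually polynomial along the curve, and at $a$ every $\sigma$-value is eventually constant. The root sees $a$'s value only once, so the root value is eventually a polynomial in $n$ in which the dependence on $n$ enters only through non-activated linear parts. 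We must design the gadget so that this linear dependence cannot emulate the threshold $n>ck$.

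\textbf{Finishing.} Take two curves, $n=(c+1)k$ (query true) and $n=(c-1)k$ (query false). Show that along both curves a $\Sigma\MPLang$ expression's root value is eventually the same polynomial in $k$, or at least has the same eventual sign, up to a symmetry of the gadget. This contradicts Boolean equivalence with $\Estar$. If this fails because linear terms can compare $n$ and $ck$ directly, we will change the gadget so that the \emph{linear} content of the root value is identical on both sides. For example, we keep the total number of $P$-leaves at distance two and the degree sums equal, so that only ReLU's ability to pass an unbounded degree-based value through a nonlinearity detects the difference.
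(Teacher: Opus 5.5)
\textbf{There is a genuine gap, at the identity layers of $\Sigma\MPLang$.} You never fix $\Estar$ or a family on which the argument goes through, and the family you propose does not separate anything. On it the target query needs no activation at all. Suppose $a$ carries $n$ pendant $P$-leaves, each $b_i$ carries $m$, and $r,a,b_i$ have $P=0$. Then $(\dm\dm P)(r)=n+km$ and $(\dm 1)(r)=k+1$. So the purely linear expression $\dm\dm P-(m+c)\dm 1+(m+c)$ evaluates to $n-ck$ at $r$, and its Booleanisation is exactly ``$n>ck$''.

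The same example refutes your key lemma. Saturation only affects values that pass through some $\sigma$, and $\dm$ transmits the unsaturated value $(\dm P)(a)=n$ to the root. So ``the root sees $a$ only once'' does not remove the $n$-dependence. Restricting to curves $n=\lambda k$ does not rescue the finishing step either. Along $\lambda=c\pm1$ the expression above gives $\pm k$. More generally, the leading coefficients that decide in which direction an activation saturates at intermediate nodes may depend on $\lambda$, and so may differ between your two curves.

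\textbf{Your fallback is the missing idea, not a detail.} The fallback is to ``change the gadget so that the linear content of the root value is identical on both sides''. Matching distance-two $P$-counts and degree sums is not enough. The balance must survive \emph{all} iterated aggregations and all activations, and the activations' saturation decisions must be unable to see the threshold. The paper gets both as follows.

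\emph{Balance.} Each parameter $u_i$ lives in a pair of equal-size classes $W_i^+,W_i^-$ with internal degree $r_i=N(1+u_i)$ and cross degree $b_i=N(1-u_i)$. Because $r_i+b_i=2N$, the pair-sum $w_i=q_{W_i^+}+q_{W_i^-}$ evolves under $\dm$ as $2q_O+2Nw_i$, free of $u_i$. The centre class $O$ is perfectly matched to each $W_i^\pm$ and only ever sees $q_O,w_1,w_2$. An eventually constant $\sigma$ turns every entry into a constant depending only on the sign cell. By contrast, $\relu$ sends $z=2\dm P-\dm 1+1$, which equals $\pm2Nu_i$ on $W_i^\pm$, to $(2Nu_i,0)$, whose pair-sum is no longer $u_i$-free. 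This is what $\Estar=\dm\relu(2\dm P-\dm 1+1)-\dm 1+4$ exploits to get $2N(u_1+u_2-\alpha)$ on $O$.

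\emph{Blind saturation decisions.} Two gadgets communicate only through $O$. Hence every polynomial governing a saturation decision is cylindrical: it mentions $u_1$ or $u_2$, never both. This lets the proof pick two rational points in one sign cell on opposite sides of $u_1+u_2=\alpha$. A threshold comparing a single gadget's parameter with a global quantity, as $n>ck$ does, can in general be decided by an activation inside that gadget. In the paper's family, for example, $2u_1>\alpha$ is detectable at $W_1^+$ with $\bool$. So the target must be a sum over two gadgets that each see only their own parameter.

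Without an analogue of this asymptotic-locality lemma, your plan has no step that actually excludes a $\Sigma\MPLang$ expression.
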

As a corollary, we resolve the  problem left open in our previous work \cite{Barcelo+2026}:
\begin{corollary}\label{cor:trrelu}
On finite Boolean-featured graphs,
\[
\bool\bigl(\mathrm{TrReLU}\MPLang\bigr)\ \subsetneq\ \bool\bigl(\mathrm{ReLU}\MPLang\bigr),
\]
and this holds already on graphs with a single Boolean feature.
\end{corollary}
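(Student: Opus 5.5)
The corollary follows by combining Theorem~\ref{thm:main} with Proposition~\ref{prop:containment}. I would take $\Sigma=\{\trelu\}$. This set consists of a single activation, and $\trelu$ is eventually constant in the sense of Definition~\ref{def:evconst}: choose $x^\sigma_-=0$, $C^\sigma_-=0$, $x^\sigma_+=1$ and $C^\sigma_+=1$. Theorem~\ref{thm:main} then applies, and it supplies a $\mathrm{ReLU}\MPLang$ expression $\Estar$ such that no $\mathrm{TrReLU}\MPLang$ expression is Boolean equivalent to $\Estar$ on graphs with a single Boolean feature.

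For the inclusion $\bool(\mathrm{TrReLU}\MPLang)\subseteq\bool(\mathrm{ReLU}\MPLang)$, I would cite Proposition~\ref{prop:containment}. It gives numerical equivalence on all embedded graphs, and this restricts to any subclass, in particular to Boolean-featured graphs with one feature.

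For strictness, I would argue that the Boolean query $(\Estar)_\B$ lies in $\bool(\mathrm{ReLU}\MPLang)$. Suppose it also lay in $\bool(\mathrm{TrReLU}\MPLang)$. Then some $\mathrm{TrReLU}\MPLang$ expression $e$ would satisfy $e_\B=(\Estar)_\B$ on the class of finite graphs with a single Boolean feature, which contradicts the theorem. The classes of Boolean queries are therefore compared as functions on single-feature graphs, and the inclusion is strict there.

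The only point requiring care is the passage from single-feature graphs to general Boolean-featured graphs. I would read "on finite Boolean-featured graphs" as covering the class of $d$-embedded Boolean graphs for each fixed $d$. The case $d=1$ is exactly the one the theorem handles. For $d>1$, I would consider the query $\Estar$ written over $P_1$. If some $\mathrm{TrReLU}\MPLang$ expression agreed with it on all $d$-featured Boolean graphs, I would substitute the constant $0$ for $P_2,\dots,P_d$. This restricts attention to graphs whose other features are identically $0$, and such graphs are in bijection with single-feature graphs. The result is a single-feature $\mathrm{TrReLU}\MPLang$ expression Boolean equivalent to $\Estar$, which contradicts the theorem. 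I expect no real obstacle: all the substantive work sits in Theorem~\ref{thm:main}.
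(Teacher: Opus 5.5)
Your proposal is correct and follows the paper's proof: containment comes from Proposition~\ref{prop:containment}, and strictness comes from Theorem~\ref{thm:main} with $\Sigma=\{\trelu\}$ applied to $(\Estar)_\B$. Your explicit check that $\trelu$ is eventually constant, and your reduction from $d$ features to one by substituting $0\cdot 1$ for $P_2,\dots,P_d$, are sound details that the paper leaves implicit.
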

\begin{proof}
The containment $\bool(\mathrm{TrReLU}\MPLang)\subseteq\bool(\mathrm{ReLU}\MPLang)$ is
Proposition~\ref{prop:containment}; strictness follows from Theorem~\ref{thm:main} with
$\Sigma=\{\trelu\}$, applied to the Boolean query
$(\Estar)_\B\in\bool(\mathrm{ReLU}\MPLang)$.
\end{proof}
\begin{remark}[GNN formulation]\label{rem:gnn}
By the correspondence between $\Sigma\MPLang$ and $(\Sigma\cup\{\mathrm{id}\})$-GNNs
\citep{Geerts_2022}, and since $\mathrm{ReLU}\MPLang$ captures exactly ReLU-GNNs,
Corollary~\ref{cor:trrelu} states that $\{\mathrm{TrReLU},\mathrm{id}\}$-GNNs are strictly
weaker than ReLU-GNNs with respect to Boolean queries on Boolean-featured graphs.
\end{remark}
\paragraph{Proof strategy.}
Define the $\mathrm{ReLU}\MPLang$ expression (over one Boolean feature $P$)
\begin{equation}\label{eq:Estar}
\boxed{\ \Estar\  \coloneqq \ \dm\relu\bigl(2\dm P-\dm 1+1\bigr)\ -\ \dm 1\ +\ 4\ }
\end{equation}
Unravelling the semantics, its Booleanisation tests, at a node $v$,
\begin{equation}\label{eq:Estar-explicit}
\begin{aligned}
(\Estar)_\B(G)(v)=1\iff{}
&\sum_{w\in N(v)}\max\Bigl\{0,\ 2\,\#\{x\in N(w):P(x)=1\}-\deg(w)+1\Bigr\}\\
&{}-\deg(v)+4>0.
\end{aligned}
\end{equation}
We evaluate expressions on a three-parameter family of graphs $G_N(u_1,u_2,\alpha)$, whose
nodes fall into classes that are indistinguishable to $\Sigma\MPLang$ expressions. On one
distinguished node class, denoted $O$, the expression $\Estar$ takes the value
$2N(u_1+u_2-\alpha)$ and thus detects on which side of the plane $u_1+u_2=\alpha$ the
parameters lie. By contrast, we show that every expression built from eventually constant
activations becomes \emph{asymptotically local}: for all sufficiently large admissible $N$,
its value on $O$ depends only on $N$ and $\alpha$, not on $u_1$ or $u_2$. Choosing two
nearby rational parameter points with the same $\alpha$ but lying on opposite sides of the
plane $u_1+u_2=\alpha$ yields two graphs on which any candidate expression takes the same value at $O$,
whereas $\Estar$ takes values of opposite sign.
\begin{figure}[t]
\centering
\begin{tikzpicture}[xscale=1.5,yscale=1.15]
\foreach \j in {0,...,4}{
  \node[draw,circle,inner sep=1.6pt,fill=white] (a\j) at (\j,1.4) {};
  \node[above=1pt of a\j,font=\scriptsize,inner sep=1pt] {$a_{\j}$};
  \node[draw,circle,inner sep=1.6pt,fill=white] (b\j) at (\j,0) {};
  \node[below=1pt of b\j,font=\scriptsize,inner sep=1pt] {$b_{\j}$};
}
\foreach \j in {0,...,4}{ \draw (a\j) -- (b\j); }
\foreach \j/\k in {0/1,1/2,2/3,3/4}{ \draw  (a\j) -- (b\k); }
\draw (a4) .. controls (4.6,0.7) and (0.6,1.9) .. (b0);
\end{tikzpicture}
\caption{The regular gadget of Lemma~\ref{lem:gadgets} for $n=5$, $d=2$.}\label{fig:gadget}
\end{figure}
\section{The graph family}\label{sec:graphs}
We now construct the graph family. Let
$\Theta \coloneqq \bigl(\tfrac14,\tfrac13\bigr)^2\times\bigl(\tfrac12,\tfrac23\bigr)$ and consider
$\theta=(u_1,u_2,\alpha)\in\Theta$. For rational $\theta\in\Theta$,
we say an integer $N\ge1$ is
\emph{$\theta$-admissible} if $Nu_1$, $Nu_2$ and $2N\alpha$ are integers; the
$\theta$-admissible $N$ are exactly the positive multiples of the least common denominator
of $u_1$, $u_2$ and $2\alpha$. Put
\begin{equation}\label{eq:params}
r_i=N(1+u_i),\qquad b_i=N(1-u_i),\qquad t=2N\alpha .
\end{equation}
Then
\begin{equation}\label{eq:bounds}
0<b_i<N<r_i<2N,\qquad N<t<2N,\qquad
r_i+b_i=2N,\qquad r_i-b_i=2Nu_i.
\end{equation}
\begin{lemma}[Regular gadgets]\label{lem:gadgets}
Let $A,B$ be disjoint $n$-element sets.
\begin{enumerate}
\item For every $0\le d\le n$, there is a simple $d$-regular bipartite graph between $A$ and $B$.
\item If $n$ is even, then for every $0\le d<n$ there is a simple $d$-regular graph on $A$.
\end{enumerate}
\end{lemma}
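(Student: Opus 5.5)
Both parts will be proved with explicit circulant constructions, indexing the vertices by $\mathbb{Z}_n$. Figure~\ref{fig:gadget} already suggests this: it shows the cyclic bipartite pattern with $a_j$ joined to $b_j$ and $b_{j+1}$.

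\emph{Part 1.} Write $A=\{a_0,\dots,a_{n-1}\}$ and $B=\{b_0,\dots,b_{n-1}\}$. Join $a_j$ to $b_{j+k \bmod n}$ for every $j\in\mathbb{Z}_n$ and every $k\in\{0,1,\dots,d-1\}$.
\begin{itemize}
\item Since $d\le n$, the shifts $0,\dots,d-1$ are pairwise distinct modulo $n$. Hence each $a_j$ has exactly $d$ distinct neighbours, and there are no parallel edges.
\item Conversely, $b_i$ is adjacent exactly to the vertices $a_{i-k}$ for $k=0,\dots,d-1$. These are again $d$ distinct vertices, so $b_i$ has degree $d$.
\item The graph is simple (no loops, since all edges go between $A$ and $B$) and bipartite by construction.
\item The case $d=0$ is the empty graph, and $d=n$ gives $K_{n,n}$.
\end{itemize}
An equivalent view is that the graph is the union of $d$ disjoint perfect matchings $a_j\mapsto b_{j+k}$.

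\emph{Part 2.} Let $n$ be even and $0\le d<n$, and identify $A$ with $\mathbb{Z}_n$.
\begin{itemize}
\item Take the circulant graph with connection set $S$ consisting of $\pm1,\dots,\pm\lfloor d/2\rfloor$. These are $2\lfloor d/2\rfloor$ distinct nonzero residues, because $\lfloor d/2\rfloor<n/2$.
\item If $d$ is odd, add the element $n/2$ to $S$. This is possible because $n$ is even, and $n/2$ is its own negative modulo $n$. Moreover, $n/2$ is not already among $\pm1,\dots,\pm\lfloor d/2\rfloor$, since $\lfloor d/2\rfloor\le (n-2)/2<n/2$ when $d\le n-1$ is odd.
\item So $S$ is a symmetric subset of $\mathbb{Z}_n\setminus\{0\}$ with $|S|=d$.
\item Join $x$ and $y$ whenever $x-y\in S$. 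Symmetry of $S$ makes this relation undirected, and $0\notin S$ rules out loops. Every vertex then has exactly $|S|=d$ neighbours.
\end{itemize}

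The only point requiring care is the counting in Part~2. We must check that the elements of $S$ are distinct modulo $n$, and that the antipodal element $n/2$ contributes degree one rather than two. This holds because $x+n/2=x-n/2$, so adding $n/2$ to $S$ adds a single perfect matching. Parity of $n$ is exactly what makes this matching available; this is consistent with the handshake constraint that $nd$ must be even.
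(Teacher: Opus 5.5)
Your proposal is correct and follows the paper's proof essentially verbatim. Part~1 uses the same cyclic-shift bipartite construction, and Part~2 uses the same circulant graph with shifts $\pm1,\dots,\pm\lfloor d/2\rfloor$, adding the antipodal matching $\{j,j+n/2\}$ when $d$ is odd, with the same distinctness checks based on $d<n$.
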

\begin{proof}
Identify each set with $\mathbb Z_n$. For (1), join $j\in A$ to
$j,j+1,\ldots,j+d-1\in B$ (indices mod $n$); each vertex on either side
gets exactly $d$ distinct neighbours since $d\le n$. (See
Figure~\ref{fig:gadget}.) For (2), if $d$ is
even, join each $j\in A$ to $j\pm1,\ldots,j\pm d/2$ (mod $n$); this is a
simple $d$-regular graph since $d/2 < n/2$, so no two of these $d$
neighbours coincide and $j$ is not joined to itself. If $d$ is odd, use
the same construction for $d-1$ and add the antipodal matching
$\{j, j+n/2\}$ (mod $n$), which requires $n$ even to be a well-defined
fixed-point-free involution, and is disjoint from the edges already
added since $n/2$ is not among $\pm1,\dots,\pm(d-1)/2$ once $d<n$. 
\end{proof}
Fix a rational $\theta\in\Theta$ and a $\theta$-admissible $N$. Take five disjoint classes,
each of size $2N$,
\[
O,\quad W_1^+,W_1^-,\quad W_2^+,W_2^-.
\]
Using Lemma~\ref{lem:gadgets}, place a $t$-regular graph inside $O$, an $r_i$-regular graph
inside each $W_i^\pm$, a $b_i$-regular bipartite gadget between $W_i^+$ and $W_i^-$, and a
perfect matching between $O$ and each $W_i^\pm$. Add no other edges, and set the Boolean node
feature to
\[
P=1\text{ on }W_1^+\cup W_2^+,
\qquad P=0\text{ elsewhere}.
\]
Call the resulting Boolean-featured graph $G_N(u_1,u_2,\alpha)$, or
$G_N(\theta)$ for short; it is depicted in Figure~\ref{fig:quotient}.
Thus $G_N(\theta)$ includes not only the underlying graph, but also the feature
map $\gamma:V\to\{0,1\}$ given by $\gamma(v)=P(v)$. Accordingly, for a $\Sigma\MPLang$ expression $h$ and a node $v$, we write
$h(G_N(\theta))(v)$ for the value of $h$ at $v$, rather than
$h(G_N(\theta),\gamma)(v)$.
\begin{figure}[t]
\centering
\begin{tikzpicture}[xscale=0.6,yscale=0.6,
  cls/.style={draw, rounded corners=3pt, minimum width=1.15cm, minimum height=0.7cm,
              fill=white, font=\small, inner sep=2pt},
  fea/.style={cls, fill=black!14},
  lab/.style={font=\scriptsize, fill=white, inner sep=1.2pt}]
\node[cls] (O) at (0,0) {$O$};
\node[fea] (W1p) at (-3,1.2) {$W_1^+$};
\node[cls] (W1m) at (-3,-1.2) {$W_1^-$};
\node[fea] (W2p) at (3,1.2) {$W_2^+$};
\node[cls] (W2m) at (3,-1.2) {$W_2^-$};
\draw (O) -- node[lab,pos=.55] {$1$} (W1p);
\draw (O) -- node[lab,pos=.55] {$1$} (W1m);
\draw (O) -- node[lab,pos=.55] {$1$} (W2p);
\draw (O) -- node[lab,pos=.55] {$1$} (W2m);
\draw (W1p) -- node[lab] {$b_1$} (W1m);
\draw (W2p) -- node[lab] {$b_2$} (W2m);
\draw (O) to[out=45,in=135,looseness=5] node[lab,left] {$t$} (O);
\draw (W1p) to[out=145,in=215,looseness=5] node[lab,left] {$r_1$} (W1p);
\draw (W1m) to[out=145,in=215,looseness=5] node[lab,left] {$r_1$} (W1m);
\draw (W2p) to[out=35,in=-35,looseness=5] node[lab,right] {$r_2$} (W2p);
\draw (W2m) to[out=35,in=-35,looseness=5] node[lab,right] {$r_2$} (W2m);
\end{tikzpicture}
\caption{A depiction of the graph $G_N(u_1,u_2,\alpha)$. A loop labelled $d$ denotes a
$d$-regular graph inside the class; an edge labelled $d$ denotes a $d$-regular bipartite
gadget. Shaded classes carry $P=1$. Recall that $r_i=N(1+u_i)$, $b_i=N(1-u_i)$, and
$t=2N\alpha$.}\label{fig:quotient}
\end{figure}
\section{Evaluation of the separating expression on the graph family}\label{sec:evaluation}
We now evaluate the separating expression $\Estar$ on the graphs $G_N(\theta)$. The key
observation is that any two nodes in the same class are interchangeable as far as
$\Sigma\MPLang$ expressions are concerned.
\begin{lemma}\label{lem:constancy}
For every $\Sigma\MPLang$ expression $h$, the map $h(G_N(\theta))(\cdot)$ is constant on
each of the five classes.
\end{lemma}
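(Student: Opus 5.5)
The plan is a structural induction on $h$, with a slightly stronger invariant: $h(G_N(\theta))(\cdot)$ is constant on each class, and these five constants depend only on $h$, $N$ and $\theta$ (not on the particular gadgets chosen in Lemma~\ref{lem:gadgets}). Before the induction I will record, for each ordered pair of classes $(X,Y)$, the number $m_{XY}$ of $Y$-neighbours of a node in $X$. Reading off the construction:
\begin{itemize}
\item $m_{OO}=t$;
\item $m_{O W_i^\pm}=m_{W_i^\pm O}=1$, since perfect matchings are used;
\item $m_{W_i^\pm W_i^\pm}=r_i$;
\item $m_{W_i^+W_i^-}=m_{W_i^-W_i^+}=b_i$;
\item all other counts are $0$.
\end{itemize}
The crucial point is that these counts do not depend on which node of $X$ is chosen. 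In other words, the partition into the five classes is an equitable partition of the graph. Since the feature $P$ is constant on each class by definition, it is also compatible with this partition.

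The induction then runs through the grammar.
\begin{itemize}
\item \textbf{Base cases.} The expression $1$ is constant everywhere. The expression $P$ takes value $1$ on $W_1^+\cup W_2^+$ and $0$ elsewhere, so it is constant on each class.
\item \textbf{Linear constructors.} For $ae$, $e_1+e_2$ and $\sigma(e)$, the value at $v$ is a fixed function of the values of the subexpressions at $v$ alone. Constancy on classes is therefore preserved immediately. This holds for any $\sigma$; eventual constancy is not needed here.
\item \textbf{Aggregation.} Suppose $e$ takes value $c_Y$ on class $Y$. For $v\in X$, I split the sum over $N(v)$ according to the class of each neighbour:
\[
(\dm e)(G_N(\theta))(v)=\sum_{Y}\ \sum_{u\in N(v)\cap Y} c_Y=\sum_Y m_{XY}\,c_Y .
\]
The right-hand side depends only on $X$, which is what we need.
\end{itemize}

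I expect no real obstacle. The only step needing care is verifying that the neighbour counts $m_{XY}$ are uniform. This requires two checks: each gadget is genuinely regular, as Lemma~\ref{lem:gadgets} guarantees, and no other edges were added. It is also worth noting explicitly that the gadgets inside a class have the degrees claimed. For the regular graph inside a class, Lemma~\ref{lem:gadgets}(2) applies because the class size $2N$ is even and $t,r_i<2N$ by \eqref{eq:bounds}. The bipartite gadget exists by Lemma~\ref{lem:gadgets}(1) because $b_i\le 2N$. With these points confirmed, the induction closes.
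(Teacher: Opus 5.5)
Your proposal is correct and takes essentially the same approach as the paper. It observes that the five-class partition is equitable, with neighbour counts $t,1,r_i,b_i,0$. It then runs a structural induction: the cases $ae$, $e_1+e_2$ and $\sigma(e)$ are pointwise, and the aggregation case is $\sum_Y m_{XY}c_Y$. Your extra checks — that the gadgets exist because $2N$ is even and $t,r_i<2N$, $b_i\le 2N$, and that the constants are independent of the gadget choice — are harmless additions not needed for the lemma itself.
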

\begin{proof}
Recall that a partition $\mathcal{C}$ of the vertex set of a graph is \emph{equitable} if,
for every two classes $X,Y\in\mathcal{C}$, there is a number $d_{XY}$ such that every vertex
in $X$ has exactly $d_{XY}$ neighbours in $Y$.
Note that the coarsest equitable partition of a graph
corresponds to the partition resulting from running colour refinement on the graph \citep{grohe2014dimension}.
The partition
$\mathcal{C}=\{O,W_1^+,W_1^-,W_2^+,W_2^-\}$ is equitable: with the classes ordered as
$O,\ W_1^+,\ W_1^-,\ W_2^+,\ W_2^-$, the inter-class neighbour counts $d_{XY}$ are given by
the quotient matrix
\[
D=(d_{XY})_{X,Y\in\mathcal C}
=
\begin{pmatrix}
t & 1 & 1 & 1 & 1\\
1 & r_1 & b_1 & 0 & 0\\
1 & b_1 & r_1 & 0 & 0\\
1 & 0 & 0 & r_2 & b_2\\
1 & 0 & 0 & b_2 & r_2
\end{pmatrix}.
\]
Indeed, every vertex in $O$ has $t$ neighbours in $O$ and one neighbour in each of the four
classes $W_i^\pm$, while every vertex in $W_i^+$ has one neighbour in $O$, $r_i$ neighbours
in $W_i^+$, and $b_i$ neighbours in $W_i^-$; the description for $W_i^-$ is symmetric. In
particular,
\begin{equation}\label{eq:degrees}
\deg(v)=t+4\ \text{ for }v\in O,
\qquad
\deg(v)=r_i+b_i+1=2N+1\ \text{ for }v\in W_i^\pm.
\end{equation}
We prove the claim by structural induction on $h$.
\emph{Base cases.}
The expression $1$ has value $1$ at every vertex, and $P$ is constant on each class by
construction: it has value $1$ on $W_1^+$ and $W_2^+$, and value $0$ on the other three
classes.
\emph{Linear and activation cases.}
If $h_1$ and $h_2$ are constant on every class, then so are $ah_1$, $h_1+h_2$, and
$\sigma(h_1)$, since these operations are evaluated pointwise.
\emph{Aggregation case.}
Suppose that $h_1$ is constant on every class, with value $(h_1)_Y$ on class $Y$. For any
class $X\in\mathcal C$ and any vertex $v\in X$, equitability gives
\[
\begin{aligned}
(\dm h_1)(G_N(\theta))(v)
  &=\sum_{u\in N(v)}h_1(u)
  =\sum_{Y\in\mathcal C}\sum_{u\in N(v)\cap Y}h_1(G_N(\theta))(u)
  =\sum_{Y\in\mathcal C}d_{XY}(h_1)_Y,
\end{aligned}
\]
which depends only on the class $X$ and not on the particular choice of $v\in X$. Hence
$\dm h_1$ is constant on every class, completing the induction.
\end{proof}
In view of Lemma~\ref{lem:constancy}, we write $h_X$ for the value of $h$ on class $X$, and
$h_X(N,\theta)$ when the instance needs to be displayed.
Recall the separating expression
\[
\Estar\  \coloneqq \ \dm\relu\bigl(2\dm P-\dm 1+1\bigr)\ -\ \dm 1\ +\ 4,
\]
and let $z \coloneqq 2\dm P-\dm 1+1$. From the construction,
\[
z_O=1-t<0,
\qquad z_{W_i^+}=r_i-b_i=2Nu_i,
\qquad z_{W_i^-}=b_i-r_i=-2Nu_i.
\]
Since every $o\in O$ has $t$ neighbours in $O$ and one in each class $W_i^\pm$,
\[
(\dm\relu(z))_O=2N(u_1+u_2).
\]
As $(\dm 1)_O=t+4$ and $t=2N\alpha$, we obtain the following \emph{key property}.
\begin{proposition}\label{prop:Estar-value}
For every rational $\theta\in\Theta$, every $\theta$-admissible $N$, and every $o\in O$,
\[
\Estar(G_N(\theta))(o)=2N(u_1+u_2-\alpha).
\]
Thus $(\Estar)_\B(G_N(\theta))(o)=1$ exactly when $u_1+u_2>\alpha$.
\end{proposition}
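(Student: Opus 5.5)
The plan is to compute $\Estar$ at $o\in O$ directly, using Lemma~\ref{lem:constancy} to reduce everything to class values and the quotient matrix $D$ for each aggregation. Concretely, I evaluate the subexpressions from the inside out: first $P$ and $1$, then $\dm P$ and $\dm 1$, then $z=2\dm P-\dm 1+1$, then $\relu(z)$, then $\dm\relu(z)$, and finally $\Estar$ at $O$.

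\emph{Step 1: compute $z$.} We have $(\dm 1)_X=\deg(X)$, which by \eqref{eq:degrees} is $t+4$ on $O$ and $2N+1$ on each $W_i^\pm$. For $\dm P$, note that $P=1$ exactly on $W_1^+\cup W_2^+$, so $(\dm P)_X=d_{X,W_1^+}+d_{X,W_2^+}$. This gives $(\dm P)_O=2$, $(\dm P)_{W_i^+}=r_i$ and $(\dm P)_{W_i^-}=b_i$. Hence
\[
z_O=4-(t+4)+1=1-t,\qquad z_{W_i^+}=2r_i-(2N+1)+1=r_i-b_i=2Nu_i,
\]
using $r_i+b_i=2N$ from \eqref{eq:bounds}, and symmetrically $z_{W_i^-}=b_i-r_i=-2Nu_i$.

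\emph{Step 2: apply $\relu$, then aggregate at $O$.} The signs come from \eqref{eq:bounds} and $\theta\in\Theta$: $t>N\ge1$ gives $z_O<0$, and $u_i>0$ gives $z_{W_i^+}>0>z_{W_i^-}$. So $\relu(z)$ equals $2Nu_i$ on $W_i^+$ and $0$ on $O$ and on $W_i^-$. The row of $D$ for $O$ then yields
\[
(\dm\relu(z))_O=t\cdot 0+1\cdot 2Nu_1+1\cdot0+1\cdot 2Nu_2+1\cdot 0=2N(u_1+u_2).
\]

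\emph{Step 3: conclude.} Subtracting $(\dm 1)_O=t+4$ and adding $4$ gives $2N(u_1+u_2)-t=2N(u_1+u_2-\alpha)$, since $t=2N\alpha$. Because $N>0$, this value is positive exactly when $u_1+u_2>\alpha$, which is the Boolean claim.

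There is no real obstacle here. The only care needed is the sign analysis in Step 2, since this is where the membership $\theta\in\Theta$ and the bounds \eqref{eq:bounds} are actually used.
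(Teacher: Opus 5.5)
Your proposal is correct and is essentially the paper's own computation. You evaluate $z=2\dm P-\dm 1+1$ classwise via the quotient matrix to get $z_O=1-t$ and $z_{W_i^\pm}=\pm 2Nu_i$, apply $\relu$, aggregate at $O$ to obtain $2N(u_1+u_2)$, and subtract $t=2N\alpha$. You also spell out the sign checks that the paper leaves implicit.
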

\section{Asymptotic locality}\label{sec:lemma}
Call a polynomial in $\R[u_1,u_2,\alpha]$ \emph{cylindrical} if it lies in $\R[\alpha]$,
$\R[u_1,\alpha]$, or $\R[u_2,\alpha]$. In other words, such polynomials never involve both $u_1$ and
$u_2$. 
\begin{definition}[Sign decompositions]\label{def:signs}
A finite set $\mathcal P$ of nonzero cylindrical polynomials
determines, for each sign assignment $s:\mathcal P\to\{+1,-1\}$, the
open (possibly empty) \emph{cell}
\[
\Theta_s \coloneqq \{\theta\in\Theta:\ \sgn(p(\theta))=s(p)\
\text{for all }p\in\mathcal P\},
\]
where $p\in\R[\alpha]$ or $\R[u_i,\alpha]$ is evaluated at the
corresponding coordinates of $\theta$. The cells are pairwise disjoint
and cover $\Theta\setminus Z(\mathcal P)$, where
$Z(\mathcal P)$ is the zero set $\bigcup_{p\in\mathcal P}\{\theta\in\Theta:
p(\theta)=0\}$.
\end{definition}
Put
\[
A_0=\R+N\R[N,\alpha],\qquad A_i=\R+N\R[N,u_i,\alpha]\quad(i=1,2).
\]
\begin{definition}[Local vectors]\label{def:local}
A \emph{class-vector} is a tuple
$q=(q_O,q_{W_1^+},q_{W_1^-},q_{W_2^+},q_{W_2^-})$ of polynomials in
$N,u_1,u_2,\alpha$. Writing $w_i(q) \coloneqq q_{W_i^+}+q_{W_i^-}$, we call $q$
\emph{local} when
\[
q_O,\,w_1(q),\,w_2(q)\in A_0
\qquad\text{and}\qquad
q_{W_i^\pm}\in A_i\quad(i=1,2).
\]
\end{definition}
Here ``local'' refers to parameter dependence. The form of $A_0$ and $A_i$ also
ensures that every nonconstant parameter-dependent term carries a
positive power of $N$.

\begin{definition}[Asymptotic locality]\label{def:asylocal}
A $\Sigma\MPLang$ expression $h$ is \emph{asymptotically local} (on the
family $G_N(\theta)$) if there exist
\begin{enumerate}
\item a finite set $\mathcal P_h$ of nonzero cylindrical polynomials,
and
\item for each sign assignment $s$ on $\mathcal P_h$ with
$\Theta_s\neq\emptyset$, a local class-vector $\Phi^s$,
\end{enumerate}
such that for every rational $\theta\in\Theta_s$ there is a threshold
$N_0(h,\theta)\in\N$ with
\[
h_X(N,\theta)=\Phi^s_X(N,\theta)
\qquad\text{for every class $X$ and every $\theta$-admissible }
N\ge N_0(h,\theta),
\]
where evaluation substitutes $N$ and the coordinates of $\theta$
occurring in $\Phi^s_X$.
\end{definition}
We will show that every $\Sigma\MPLang$ expression is asymptotically
local (Section~\ref{sec:locality-holds}) when $\Sigma$ contains only eventually constant activation functions. Before doing so, we show that the main theorem
already follows.
\section{Proof of Theorem~\ref{thm:main}}
The only ingredient needed is the specialisation of
Definition~\ref{def:asylocal} to the class $O$: if $h$ is
asymptotically local then, on each cell $\Theta_s$, there is a
\emph{single} polynomial
$F_s \coloneqq \Phi^s_O\in A_0\subseteq\R[N,\alpha]$, mentioning neither $u_1$
nor $u_2$, with
\begin{equation}\label{eq:O-blind}
h\bigl(G_N(\theta)\bigr)(o)=F_s(N,\alpha)
\text{ for all } o\in O,\ \text{all rational }\theta\in\Theta_s,\
\text{all admissible } N\ge N_0(h,\theta).
\end{equation}
In particular, let $\theta,\theta'\in\Theta_s$ be two rational points with the
same $\alpha$-coordinate.  For every common admissible $N$ with
$N\ge\max\{N_0(h,\theta),N_0(h,\theta')\}$, the expression $h$ takes the same
value on the class $O$ in $G_N(\theta)$ and $G_N(\theta')$ -- no matter how their
parameters $u_1,u_2$ differ. The target query, by contrast,
flips exactly across the surface $u_1+u_2=\alpha$
(Proposition~\ref{prop:Estar-value}). The proof below drives a wedge
between the two, using one classical fact.
\begin{fact}\label{fact:poly-open}
A polynomial $p\in\R[x_1,\dots,x_n]$ that vanishes on a nonempty open
subset $U\subseteq\R^n$ is the zero polynomial.
\end{fact}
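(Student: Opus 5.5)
The statement just before this point is Fact~\ref{fact:poly-open}, a classical fact that the paper invokes as a tool. I would prove it by induction on the number of variables $n$.

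\emph{Reduction to a box.} Since $U$ is open and nonempty, it contains an open box $I_1\times\dots\times I_n$, where each $I_j$ is a nonempty open interval and hence an infinite set. It therefore suffices to show that a polynomial vanishing on $S_1\times\dots\times S_n$, with every $S_j$ infinite, is the zero polynomial.

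\emph{Base case $n=1$.} A nonzero univariate polynomial of degree $k$ has at most $k$ real roots. A polynomial vanishing on the infinite set $S_1$ must therefore be zero.

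\emph{Inductive step.} Expand $p$ in powers of $x_n$:
\[
p=\sum_{k=0}^{m} c_k(x_1,\dots,x_{n-1})\,x_n^k .
\]
Fix any point $(a_1,\dots,a_{n-1})\in S_1\times\dots\times S_{n-1}$. The univariate polynomial $x_n\mapsto p(a_1,\dots,a_{n-1},x_n)$ vanishes on the infinite set $S_n$. By the base case its coefficients vanish, so $c_k(a_1,\dots,a_{n-1})=0$ for every $k$. This holds at every point of $S_1\times\dots\times S_{n-1}$. By the induction hypothesis each $c_k$ is the zero polynomial, and hence so is $p$.

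There is no real obstacle here. The only point needing care is the reduction to a product set, so that the induction hypothesis applies cleanly to the coefficients $c_k$ at every fixed choice of the first $n-1$ coordinates. An alternative route would argue that all partial derivatives of $p$ vanish at an interior point of $U$, and that the Taylor coefficients of a polynomial at that point determine it. The induction above is more elementary, and I would use it.
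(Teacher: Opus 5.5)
Your proof is correct and complete. The paper gives no proof of this fact: it calls it classical and uses only its contrapositive, for $n=2$, to conclude that the nonzero polynomial $\prod_{p}\tilde p$ is nonzero somewhere on the open box $(\tfrac14,\tfrac13)^2$. So there is no argument in the paper to compare yours with. Your strengthening of the hypothesis to vanishing on a product $S_1\times\dots\times S_n$ of infinite sets is what lets the induction hypothesis apply to each coefficient $c_k$, and it is the standard elementary proof. The Taylor-coefficient route you mention would also work but needs a bit more machinery.
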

\begin{proof}[Proof of Theorem~\ref{thm:main}]
Suppose some $\Sigma\MPLang$ expression $f$ is Boolean equivalent to
$\Estar$ on all graphs with one Boolean feature. We know that $f$ is asymptotically local; let
$\mathcal P$ be its finite set of nonzero cylindrical polynomials and
$F_s$ the cell-wise polynomials of \eqref{eq:O-blind}.

\smallskip
\noindent
We first select a base point on the critical surface $\alpha=u_1+u_2$.  Consider the diagonal
piece $H=\{(x,y,x+y):(x,y)\in(\tfrac14,\tfrac13)^2\}\subseteq\Theta$ of
the surface $u_1+u_2=\alpha$.
We first show that there is a point of $H$ at which every $p\in\mathcal P$
is nonzero. For each $p\in\mathcal P$, we define $\tilde p\in\R[u_1,u_2]$ by
$\tilde p(u_1,u_2) \coloneqq p(u_i,u_1+u_2)$ if $p\in\R[u_i,\alpha]$,
and $\tilde p(u_1,u_2) \coloneqq p(u_1+u_2)$ if $p\in\R[\alpha]$.
Each polynomial $\tilde p$ is nonzero.  For instance, suppose
$p\in\R[u_1,\alpha]$ and $\tilde p$ were identically zero.  Then, for all
$u_1,\alpha$,
\[
  p(u_1,\alpha)
  =
  p(u_1,u_1+(\alpha-u_1))
  =
  \tilde p(u_1,\alpha-u_1)
  =
  0,
\]
contradicting that $p$ is nonzero.  The case $p\in\R[u_2,\alpha]$ is the same,
using $u_1=\alpha-u_2$, and the case $p\in\R[\alpha]$ is immediate.
This is where cylindricity is indispensable: a polynomial mentioning
both $u_1$ and $u_2$, such as $u_1+u_2-\alpha$ itself, could vanish
identically on $H$.
Their product $\prod_{p\in\mathcal P}\tilde p$ is thus also nonzero,
and by Fact~\ref{fact:poly-open}, it is nonzero somewhere on the open box
$(\tfrac14,\tfrac13)^2$, so by continuity and the density of $\Q^2$
it is nonzero at a rational point $(a,b)\in(\tfrac14,\tfrac13)^2$.
Thus every $p\in\mathcal P$ is nonzero at the point
$\theta^*\coloneqq(a,b,a+b)$, by construction.
Then $\theta^*\in\Theta_s$, where $s(p)\coloneqq\sgn(p(\theta^*))$.

\smallskip
\noindent
We next create two nearby points on opposite sides of the critical surface, yet inside one cell. Since
$\mathcal P$ is finite and its members are continuous and nonzero at
$\theta^*$, we may pick a rational $\varepsilon>0$ so small that
$a\pm\varepsilon\in(\tfrac14,\tfrac13)$ and every $p\in\mathcal P$
keeps its sign at
\[
\theta^\pm \coloneqq (a\pm\varepsilon,\,b,\,a+b).
\]
Then $\theta^-,\theta^+\in\Theta_s$, they share $\alpha_0 \coloneqq a+b$, and
$u_1+u_2-\alpha$ evaluates to $\mp\varepsilon$ at $\theta^\mp$: the two
points lie on opposite sides of the critical surface.

\smallskip
\noindent
We are now ready to obtain a contradiction. Choose $N$ to be a common multiple of the least
common denominators of $\theta^-$ and $\theta^+$ with
$N\ge\max\{N_0(f,\theta^-),N_0(f,\theta^+)\}$; such $N$ exist,
since there are arbitrarily large common multiples.
Both
graphs $G_N(\theta^\pm)$ are then defined, and by \eqref{eq:O-blind},
applied on the cell $\Theta_s$ containing both points, for every
$o\in O$
\[
f\bigl(G_N(\theta^-)\bigr)(o)=F_s(N,\alpha_0)
=f\bigl(G_N(\theta^+)\bigr)(o),
\]
one and the same real number. Yet
Proposition~\ref{prop:Estar-value} gives
$\Estar\bigl(G_N(\theta^\mp)\bigr)(o)=\mp2N\varepsilon$, of opposite
signs. So the Booleanisations of $f$ agree at the $O$-vertices of the
two graphs while those of $\Estar$ differ, contradicting Boolean
equivalence.
\end{proof}
\section{Eventually constant implies asymptotically local}\label{sec:locality-holds}
It remains to prove that every $\Sigma\MPLang$ expression is asymptotically
local whenever all activations in $\Sigma$ are eventually constant.

\begin{lemma}\label{lem:normalform}
Let $\Sigma$ be a collection of eventually constant activation functions. Then every $\Sigma\MPLang$ expression is asymptotically local.
\end{lemma}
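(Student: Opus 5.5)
We prove the lemma by structural induction on $h$. For each constructor we build, from the data $(\mathcal P,\Phi^s)$ of the subexpressions, a new finite set of nonzero cylindrical polynomials and new local class-vectors. Whenever two subexpressions are combined we use the common refinement: the union of their polynomial sets. Every cell of the union lies inside a cell of each part. For a rational $\theta$ in such a cell we take as threshold the maximum of the subexpressions' thresholds.

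\textbf{Base and linear cases.} For the constant $1$ we take the vector with all entries $1$. For $P$ we take entries $1$ on $W_i^+$ and $0$ elsewhere. Both are local with $\mathcal P=\emptyset$: constants lie in every $A_j$, and $w_i=1$ in the case of $P$. Local class-vectors are closed under real scalar multiples and sums, since $A_0,A_1,A_2$ are real vector spaces.

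\textbf{Aggregation case.} We multiply by the quotient matrix $D$ of Lemma~\ref{lem:constancy}, with $t=2N\alpha$, $r_i=N(1+u_i)$ and $b_i=N(1-u_i)$. For the $O$-entry we get
\[
(\dm q)_O = t\,q_O+w_1(q)+w_2(q),
\]
which lies in $A_0$ because $N A_0\subseteq A_0$. For the $W_i^\pm$-entries we get
\[
(\dm q)_{W_i^\pm} = q_O + r_i q_{W_i^\pm} + b_i q_{W_i^\mp}.
\]
This lies in $A_i$, because $N(1\pm u_i)A_i\subseteq A_i$ and $A_0\subseteq A_i$. The new $w_i$ equals $2q_O + (r_i+b_i)w_i(q) = 2q_O+2N\,w_i(q)$, which lies in $A_0$. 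This is the point where the precise parameter choices ($r_i+b_i=2N$, and the single link to $O$) are essential.

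\textbf{Activation case: the main obstacle.} Fix $\sigma$ with thresholds $x^\sigma_\pm$ and a cell $s$ with local vector $\Phi^s$. Each entry $\Phi^s_X$ has the form $c+N g(N,\cdot)$ with $g$ polynomial in $N$ and either $\alpha$ alone or $u_i,\alpha$. As a polynomial in $N$, it is either the constant $c$ or has a leading coefficient $\ell_X(\cdot)$, a nonzero cylindrical polynomial in $\alpha$ or $(u_i,\alpha)$.

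We add every such $\ell_X$ to $\mathcal P$. On a refined cell, $\mathrm{sgn}(\ell_X)$ is fixed. If it is positive (negative), then for fixed rational $\theta$ the entry tends to $+\infty$ ($-\infty$) as $N\to\infty$. So for large admissible $N$ it exceeds $x^\sigma_+$ (falls below $x^\sigma_-$), and $\sigma$ of it equals $C^\sigma_+$ ($C^\sigma_-$). The threshold may depend on $\theta$, which the definition permits. If instead the entry is the constant $c$, then $\sigma$ of it is the constant $\sigma(c)$.

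The new vector therefore consists of constants, which lie in every $A_j$. It is local, since $w_i$ is also constant. We must also check that cylindricity is preserved: each leading coefficient involves only $\alpha$ or $(u_i,\alpha)$, so this holds. Finally, $w_i$ itself need not be fed through $\sigma$, because after the activation the new entries are all constants.
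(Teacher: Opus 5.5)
Your proof is correct and follows essentially the same route as the paper's. Both use structural induction with the common refinement of the polynomial sets. Both check closure of $A_0,A_i$ under the quotient matrix $D$, with $r_i+b_i=2N$ handling the pair-sums. For activations, both refine by the coefficients in $N$, so each argument is either a parameter-free constant or diverges with a fixed sign. The only difference is cosmetic: you add just the leading coefficient $\ell_X$ of each entry to $\mathcal P_h$, whereas the paper adds all nonzero coefficients $a_{X,k}$, and either choice suffices.
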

\begin{proof}
We proceed by structural induction on  $\Sigma\MPLang$ expressions $h$.
Along the induction the sets $\mathcal P_h$
only grow ($\mathcal P_{h'}\subseteq\mathcal P_h$ for $h'$ a
subexpression of $h$); a sign assignment on the larger set restricts to
one on the smaller, whose cell contains the (nonempty) finer cell, so
inductive data remains available and valid on every finer cell, with
unchanged thresholds.

\smallskip
\noindent
\emph{Base cases.} For $h=1$ and $h=P$, take
$\mathcal P_h \coloneqq \emptyset$ (the unique empty sign assignment has cell
$\Theta$) and the constant vectors
\[
\Phi_{1} \coloneqq (1,1,1,1,1),\qquad
\Phi_{P} \coloneqq (0,1,0,1,0),
\]
recording that $P=1$ exactly on $W_1^+\cup W_2^+$. Constants lie in
$A_0\subseteq A_i$ and the pair-sums are constants, so both vectors are
local; the defining equalities hold exactly, with $N_0=1$.

\smallskip
\noindent
\emph{Linear cases.} For $h=a\,h_1$ and $h=h_1+h_2$, take
$\mathcal P_h \coloneqq \mathcal P_{h_1}$, resp.\
$\mathcal P_{h_1}\cup\mathcal P_{h_2}$, and the corresponding linear
combinations of the inductive vectors. The classes $A_0,A_i$ and the
pair-sum conditions are closed under linear combinations, and class
values combine linearly. So the resulting vector is local and continues to describe the class-values
for all sufficiently large admissible $N$
(by choosing the maximum of the inherited thresholds as the new threshold).

\smallskip
\noindent
\emph{Aggregation.} For $h=\dm h_1$, take
$\mathcal P_h \coloneqq \mathcal P_{h_1}$: aggregation makes no decisions. Let
$D$ be the quotient adjacency operator of Lemma~\ref{lem:constancy} and
$q \coloneqq \Phi^s$ the inductive vector; define $\Phi^s_{\text{new}} \coloneqq Dq$,
i.e., substitute $t=2N\alpha$, $r_i=N(1+u_i)$, $b_i=N(1-u_i)$ into the
quotient matrix.
For every rational $\theta$ in the cell and every admissible
$N\ge N_0(h_1,\theta)$, the inductive vector gives the class-values of
$h_1$.  Hence the class-values of $\dm h_1$ are obtained by applying the
quotient matrix $D$ to that vector.
For locality, compute from
the quotient matrix:
\[
(Dq)_O=2N\alpha\, q_O+w_1(q)+w_2(q),
\qquad
w_i(Dq)=2q_O+2N\,w_i(q),
\]
and individually
$(Dq)_{W_i^\pm}=q_O+N(1\pm u_i)\,q_{W_i^+}+N(1\mp u_i)\,q_{W_i^-}$.
By induction $q_O,w_1(q),w_2(q)\in A_0$, and multiplying an element of
$A_0$ by $2N\alpha$ or $2N$ lands in $N\R[N,\alpha]\subseteq A_0$; hence
$(Dq)_O$ and $w_i(Dq)$ lie in $A_0$.
Furthermore, $(Dq)_{W_i^\pm}\in A_i$,
as the multipliers $N(1\pm u_i)$ carry a
factor $N$ and involve only $u_i$, and $q_O\in A_0\subseteq A_i$,
$q_{W_i^\pm}\in A_i$ by induction.
So $Dq$ is local.

\smallskip
\noindent
\emph{Activation.} For $h=\sigma(h_1)$ with $\sigma\in\Sigma$
eventually constant, fix a sign assignment $s$ on $\mathcal P_{h_1}$
with nonempty cell and write each coordinate of the inductive vector as
\[
\Phi^s_X=c_X+\sum_{k\ge1}a_{X,k}N^k,
\qquad c_X\in\R,\quad a_{X,k}\ \text{cylindrical},
\]
which is precisely what membership in $A_0,A_i$ provides. Enlarge
$\mathcal P_h$ by all nonzero $a_{X,k}$ (over all $s$ and the five
classes: finitely many polynomials). On any cell of the enlarged set,
for each class $X$ either all $a_{X,k}$ are zero polynomials, and
then the argument of $\sigma$ at $X$ is the
\emph{parameter-free} constant $c_X$, so $\sigma(h_1)_X$ is
$\sigma(c_X)$, or the leading nonzero coefficient in $N$ has a fixed sign
on the cell, so the argument is a polynomial function of $N$ diverging
to $+\infty$ or $-\infty$, and $\sigma(h_1)_X$ is eventually
$C^\sigma_+$ or $C^\sigma_-$ (Definition~\ref{def:evconst}). In every
case the outcome is a real constant determined by the cell, reached
beyond a threshold depending on the rational parameter point (how large
$N$ must be for the leading term to dominate and for the argument to
leave $[x^\sigma_-,x^\sigma_+]$). The resulting constant vector is
local, all its entries and pair-sums being real numbers. Note that the
polynomials added to $\mathcal P_h$ are coefficients of entries of a
local vector, hence lie in $\R[\alpha]$ or $\R[u_i,\alpha]$ and
cylindricity is maintained.
\end{proof}
\bibliographystyle{plain}

\begin{thebibliography}{1}

\bibitem{Barcelo+2026}
Pablo Barcel{\'o}, Floris Geerts, Matthias Lanzinger, Klara Pakhomenko, and Jan Van~den Bussche.
\newblock A logical view of {GNN}-style computation and the role of activation functions.
\newblock {\em Proc. ACM Manag. Data}, 4(2), May 2026.

\bibitem{benedikt2025decidabilitygraphneuralnetworks}
Michael Benedikt, Chia-Hsuan Lu, and Tony Tan.
\newblock Decidability of graph neural networks via logical characterizations.
\newblock {\em ACM Trans. Comput. Logic}, 27(2), April 2026.

\bibitem{Geerts_2022}
Floris Geerts, Jasper Steegmans, and Jan Van~den Bussche.
\newblock On the expressive power of message-passing neural networks as global feature map transformers.
\newblock In {\em Foundations of Information and Knowledge Systems}, page 20–34. Springer, 2022.

\bibitem{grohe2014dimension}
Martin Grohe, Kristian Kersting, Martin Mladenov, and Erkal Selman.
\newblock Dimension reduction via colour refinement.
\newblock In {\em European Symposium on Algorithms}, pages 505--516. Springer, 2014.

\end{thebibliography}

\end{document}